%
%

\documentclass[11pt,a4paper]{article}
\usepackage[hyperref]{acl2021}
\usepackage{times}
\usepackage{latexsym}

\usepackage{microtype}
\usepackage{graphicx}
\usepackage{amsfonts}

\usepackage{amsmath}
\usepackage{amsthm}
\theoremstyle{definition}
\newtheorem{definition}{Definition}[section]
\newtheorem{proposition}{Proposition}

\newtheorem{condition}{Condition}

\aclfinalcopy 


\title{Attention Flows are Shapley Value Explanations}


\author{Kawin Ethayarajh \\
  Stanford University \\
  \texttt{kawin@stanford.edu} \\\And
  Dan Jurafsky \\
  Stanford University \\
  \texttt{jurafsky@stanford.edu} \\}

\begin{document}
\maketitle
\begin{abstract}
Shapley Values, a solution to the credit assignment problem in cooperative game theory, are a popular type of explanation in machine learning, having been used to explain the importance of features, embeddings, and even neurons. In NLP, however, leave-one-out and attention-based explanations still predominate. Can we draw a connection between these different methods? We formally prove that --- save for the degenerate case --- attention weights and leave-one-out values cannot be Shapley Values. \emph{Attention flow} is a post-processed variant of attention weights obtained by running the max-flow algorithm on the attention graph. Perhaps surprisingly, we prove that attention flows are indeed Shapley Values, at least at the layerwise level. Given the many desirable theoretical qualities of Shapley Values --- which has driven their adoption among the ML community --- we argue that NLP practitioners should, when possible, adopt attention flow explanations alongside more traditional ones.
\end{abstract}

\section{Introduction}

The approaches to model interpretability taken by the ML and NLP communities overlap in some areas and diverge in others. Notably, in machine learning, model prediction has sometimes been framed as a cooperative effort between the potential subjects of an explanation (e.g., input tokens) \citep{lundberg2017unified}. But how should we allocate the credit for a prediction, given that some subjects contribute more than others (e.g., the sentiment words in sentiment classification)? The Shapley Value is a solution to this problem that uniquely satisfies several criteria for equitable allocation \citep{shapley1953s}. However, while Shapley Value explanations have been widely adopted by the ML community --- to analyze the importance of features, neurons, and even training data \citep{ghorbani2019data,ghorbani2020neuron} --- they have had far less traction in NLP, where leave-one-out and attention-based explanations still predominate.

What is the connection between these different paradigms? When, if ever, are attention weights and leave-one-out values effectively Shapley Values? The adoption of Shapley Values --- which have their origins in game theory \citep{shapley1953s} --- by the ML community can be ascribed to their many desirable theoretical qualities. For example, consider a token whose masking out does not impact the model prediction in any way, regardless of how many other tokens in the sentence are also masked out. In game theory, such a token would be called a \emph{null player}, whose Shapley Value is guaranteed to be zero \citep{myerson1977graphs,young1985monotonic}. If we could provably identify the conditions under which attention weights and leave-one-out values are Shapley Values, we could extend such theoretical guarantees to them as well. 

In this work, we first prove that --- save for the degenerate case --- attention weights and leave-one-out values cannot be Shapley Values. More formally, there is no set of \emph{players} (i.e., possible subjects of an explanation, such as tokens) and \emph{payoff} (i.e., function defining prediction quality) such that the values induced by attention or leave-one-out also satisfy the definition of a Shapley Value. We then turn to \emph{attention flow}, a post-processed variant of attention weights obtained by running the max-flow algorithm on the attention graph \citep{abnar-zuidema-2020-quantifying}. We prove that when the players all come from the same layer (e.g., tokens in the input layer), there exists a payoff function such that attention flows are Shapley Values.

This means that under certain conditions, we can extend the theoretical guarantees associated with the Shapley Value to attention flow as well. As we show, these guarantees are axioms of faithful interpretation, and having them can increase confidence in interpretations of black-box NLP models. For this reason, we argue that whenever possible, NLP practitioners should use attention flow-based explanations alongside more traditional ones, such as gradients \citep{feng2018pathologies,smilkov2017smoothgrad}. We conclude by discussing some of the limitations in calculating Shapley Values for any arbitrary player set and payoff function in NLP.

\section{Model Interpretation as a Game}

The Shapley Value \citep{shapley1953s} was proposed as a solution to a classic problem in game theory: When a group of players work together to achieve a payoff, how can we fairly allocate the payoff to each player, given that some contribute more than others? The players here are the potential subjects of the explanation (e.g., input tokens); the payoff is some quality of the model prediction (e.g., correctness). We contextualize the game theoretic terms with respect to model interpretability below. 

\begin{definition}
A \emph{player} is a possible subject of the explanation (e.g., character, token, embedding, neuron). $N = \{1, ..., n\}$ is the set of all players. 
\end{definition}

\begin{definition}
A \emph{coalition} is a subset of players $S \subseteq N$ that work together. There are $2^n$ possible coalitions. The other players $N \setminus S$ are left out by being replaced with a non-subject that cannot affect the outcome (e.g., a zeroed-out embedding or a dropped-out neuron).
\end{definition}

\begin{definition}
The \emph{payoff} reflects some quality of the model prediction --- e.g., correctness, confidence, entropy --- made using a given coalition. It is defined by a \emph{payoff function} $v : 2^N \to \mathbb{R}$, where $v(\emptyset) = 0$. The \emph{value} $\phi_i(v)$ of a player $i$ is the share of the payoff allocated to it. In other words, it is the importance accorded to subject $i$ of an explanation.
\end{definition}

\begin{definition}
A \emph{game} is defined by $(N,v)$, a player set $N$ and payoff function $v$. It is a \emph{transferable utility} game (TU-game), where the payoff can be distributed among the players as desired. In the game of model interpretation, the subjects of the explanation are framed as players working cooperatively to make the best possible prediction.
\end{definition}

\subsection{Equitable Allocation}\label{ssec:criteria}

How can we allocate the payoff equitably, in a way that reflects the actual contribution made by each player? In other words, how can we faithfully interpret a prediction? The game theory literature proposes that any equitable payoff allocation satisfies these three conditions \citep{myerson1977graphs,young1985monotonic,ghorbani2019data}:

\begin{condition}{(\emph{Null Player})}: A player that induces no change in the payoff from joining any coalition has zero value. Formally, $\forall\ S \subseteq N \setminus \{i\}, v(S \cup \{i\}) = v(S) \implies \phi_i = 0$.
\label{condition:null_player}
\end{condition}

\begin{condition}{(\emph{Symmetry})}: Two players who induce the same change in payoff upon joining every coalition (that excludes them) have the same value. Formally, $\forall\ S \subseteq N \setminus \{i,j\}, v(S \cup \{i\}) = v(S \cup \{j\}) \implies \phi_i = \phi_j$.
\label{condition:symmetry}
\end{condition}

\begin{condition}{(\emph{Additivity})}: The value of a player across two different games with payoff $v,w$ should be the sum of its value in each game. Formally, $\forall\ i \in N, \phi_i(v + w) = \phi_i(v) + \phi_i(w)$.
\end{condition}

\subsection{The Shapley Value}

The Shapley Value is a well-known solution to the problem of payoff allocation in a cooperative setting, as it uniquely satisfies the three criteria for equitable allocation in \ref{ssec:criteria} \citep{shapley1953s,myerson1977graphs,young1985monotonic}. It sets the value of a player to be its expected incremental contribution to a coalition, over all possible coalitions.

\begin{definition}
Where $R$ is one of $n!$ possible permutations of the player set $N$, let $P_{R[:i]}$ be the subset of players that precede player $i$ in the permutation. Then, for a given payoff function $v$, the Shapley Value of player $i$ is \begin{equation}
    \phi_i(v) = \frac{1}{n!} \sum_{R} [ v(P_{R[:i]} \cup \{i\}) - v(P_{R[:i]}) ]
\label{definition:shapley}
\end{equation}
There are other equivalent ways of expressing the Shapley Value, including as a sum over the $2^n$ possible coalitions.
\end{definition}

In addition to satisfying our three criteria of equitable allocation (\ref{ssec:criteria}), a Shapley Value distribution always exists and is unique for a TU-game $(N, v)$. Unlike with attention weights, which have been criticized for allowing counterfactual explanations \citep{jain2019attention,serrano2019attention}, there can thus be no counterfactual Shapley Value distribution for a given input and payoff function $v$. The distribution is also said to be \emph{efficient}, since it allocates all of the payoff: $v(N) = \sum_{i \in N} \phi_i(v)$ \citep{myerson1977graphs,young1985monotonic}. The Shapley Value can, in theory, be computed for any player set and payoff function. However, in practice, there are typically too many players to calculate this combinatorial expression exactly. Generally, estimates are taken by uniformly sampling $m$ random permutations $\mathcal{R}$ \citep{ghorbani2019data}:
\begin{equation}
    \hat{\phi_i}(v) = \frac{1}{m} \sum_{R \in \mathcal{R}} [ v(P_{R[:i]} \cup \{i\}) - v(P_{R[:i]}) ]
\label{definition:shapley_approx}
\end{equation}
In the rest of this paper, we ask: Is there \emph{some} TU-game $(N, v)$ for which attention weights / attention flows / leave-one-out values are Shapley Values? If so, for which games?

\section{Attention Weights}
\label{sec:attention}

Many have argued that attention weights are not a faithful explanation, on the basis of \emph{consistency} (i.e., poor correlation with other importance measures) and \emph{non-exclusivity} (i.e., multiple explanations leading to the same outcome) \citep{jain2019attention}. Others have countered that they have some utility \citep{wiegreffe2019attention}. Without making assumptions about their inherent utility, we prove in this section that they cannot be Shapley Value explanations, outside of the degenerate case.

\begin{proposition}
If some player is attended to more than another, there is no TU-game $(N, v)$ for which attention weights are Shapley Values.
\label{prop:attention_bad}
\end{proposition}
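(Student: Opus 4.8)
The plan is to argue by contradiction, using only two facts about Shapley values that the excerpt already supplies: the Symmetry property (Condition~\ref{condition:symmetry}) and the Efficiency property noted above, $\sum_{i\in N}\phi_i(v)=v(N)$. Suppose that for some game $(N,v)$ we had $\phi_i(v)=\alpha_i$ for every player $i$, where $\alpha_1,\dots,\alpha_n$ denote the attention weights that the query of interest places on the $n$ players. The first and crucial step is to pin down the payoff function that attention actually induces. When the players in $N\setminus S$ are left out, they are masked and the attention mechanism re-runs its softmax over the surviving players $S$; hence the attention mass that any nonempty coalition controls is always exactly $1$, and the only scalar one can consistently read off the mechanism as the payoff of $S$ is this mass. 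So the payoff induced by attention is $v(S)=1$ for $S\neq\emptyset$ and $v(\emptyset)=0$ (equivalently: the attention weights restricted to $S$ must sum to $v(S)$, and renormalized attention always sums to $1$).

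Given this $v$, the remainder is a short calculation. For every player $i$ and every $S\subseteq N\setminus\{i\}$ the marginal contribution $v(S\cup\{i\})-v(S)$ equals $1$ if $S=\emptyset$ and $0$ otherwise --- a value that does not depend on which player $i$ is. Consequently any two players $i,j$ satisfy $v(S\cup\{i\})=v(S\cup\{j\})$ for all $S\subseteq N\setminus\{i,j\}$, so by Symmetry $\phi_i(v)=\phi_j(v)$; together with Efficiency, $\sum_{i\in N}\phi_i(v)=v(N)=1$, this forces $\phi_i(v)=1/n$ for all $i$. But then $\alpha_i=1/n=\alpha_j$ for every pair, contradicting the hypothesis that some player is attended to strictly more than another. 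One could equally plug $v$ into Equation~\ref{definition:shapley} and observe that every permutation contributes $1$ to whichever player comes first and $0$ to the rest, so each player's average marginal is $1/n$.

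The hard part is the first step, not the arithmetic: I must justify that attention is consistent with essentially the single payoff function $v(S)=1$ for $S\neq\emptyset$, and in particular rule out the ``artificial'' additive game $v(S)=\sum_{i\in S}\alpha_i$, whose Shapley values are trivially the $\alpha_i$. The way to dispose of that game is to note that it is not realizable by the model --- it would require the coalition $\{i\}$ to have payoff $\alpha_i<1$, whereas running the network on that single token produces attention weight $1$ on it, not $\alpha_i$ --- and, more generally, any payoff definable from the renormalized attention weights of a coalition is constant across all nonempty coalitions. I would also record the degenerate boundary case explicitly: when attention is uniform, $\alpha_i=1/n$ does coincide with the Shapley value of $v$, which is exactly why the statement is phrased with the strict comparison ``attended to more than''.
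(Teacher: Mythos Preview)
Your argument is correct and in places cleaner than the paper's, but it works in a narrower setting. You take the players to be the $n$ tokens attended to by a \emph{single} external query, so the realizable payoff is the total attention mass, $v(S)=1$ for $S\neq\emptyset$; Symmetry plus Efficiency then force $\phi_i=1/n$, and you are done. The paper instead takes the whole attention graph as the player set --- nodes may both attend and be attended to --- with $\phi_i=\sum_j a_{j,i}$ the total attention received. Its payoff is the sum of attention inside the coalition, which is essentially the number of attending players in $S$, and the contradiction comes from a two-case split: an attending player's marginal is always $1$ (Case~1), while a purely attended-to player's marginal is always $0$ because its incoming mass is redistributed (Case~2). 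Your proof is, in effect, the single-query specialization of Case~2, packaged more crisply through the Symmetry axiom rather than a direct marginal computation. Both proofs hinge on the same step you rightly identify as the hard part --- ruling out the additive game $v(S)=\sum_{i\in S}\alpha_i$ by insisting that $v$ be computed by actually re-running the masked model. You are more explicit about this than the paper, which waves at it via efficiency (``the only applicable payoff function''); on the other hand, the paper's case analysis also covers players that themselves emit attention (e.g., intermediate-layer nodes), which your single-query framing does not reach.
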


\begin{proof}
Assume that attention weights are Shapley Values for some TU-game. Shapley Values are necessarily efficient (i.e., $v(N) = \sum_i \phi_i(v)$) \citep{myerson1977graphs,young1985monotonic}, so for attention weights to be efficient, the only applicable payoff function would be the sum of attention weights. Since each player only has one Shapley Value for a given $v$, if it is attended to multiple times, its value must be the \emph{total} attention paid to it: where $a_{j,i}$ denotes the attention $j$ pays to $i$, $\phi_i(v) = \sum_{j \in N} a_{j,i}$. Note that the payoff for a coalition $S$ is within some constant of its cardinality, since for a player $j$, the weights $a_{j,\cdot}$ of the players that it attends to sum to 1 \citep{bahdanau2014neural}. We consider two cases.

\paragraph{Case 1} For a player $j$ that attends to some other player, its contribution to the payoff of every $S \in N \setminus \{j\}$ is $\sum a_{j,\cdot} = 1$, implying $\phi_j(v) = 1$ by the Shapley Value definition (\ref{definition:shapley}). If some player (that pays attention) is more or less attended to than another --- which is the point of using attention --- this results in a contradiction. Thus $\phi_j$ cannot be the total attention paid to $j$. 

\paragraph{Case 2} For a player $i$ that doesn't attend to any other player, its contribution to the payoff of every $S \in N \setminus \{i\}$ is 0, since the attention paid to $i$ is redistributed among other players when it is absent. This implies $\phi_i(v) = 0$ by (\ref{definition:shapley}). However, all input embeddings fall under this case, and we know at least one will be attended to; its attention weights will be non-zero, making this a contradiction. Thus $\phi_i$ cannot be the total attention paid to $i$. 
\end{proof}

\section{Attention Flows}

\begin{figure*}
    \centering
    \includegraphics[width=0.9\textwidth]{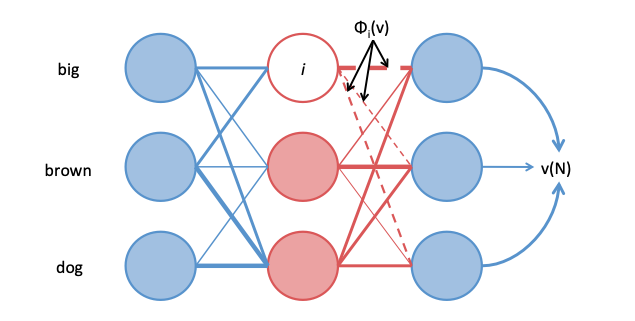}
    \caption{The attention flow network for three tokens across three layers, with player nodes (red) and non-player nodes (blue). The payoff $v(N)$ is the total flow through the network. $\phi_i(v)$ is the total outgoing flow of player $i$. Note that if we remove player $i$, then the total flow will decrease by $\phi_i(v)$, but the outgoing flow of the other two players (red) will stay the same. In other words, the contribution of player $i$ to the total flow $v(N)$ is always $\phi_i(v)$; therefore, $\phi_i(v)$ is its Shapley Value. This construction is possible because the players are all in the same layer and therefore parallel; if one depended on another, then its outgoing flow could not be its Shapley Value.}
    \label{fig:my_label}
\end{figure*}

What if we restricted the players to those from the same layer of a model? The remaining players still affect the prediction but can't have any of the payoff allocated to them. In this case, attention weights still cannot be Shapley Values. However, attention weights can be post-processed. \citet{abnar-zuidema-2020-quantifying} proposed treating the self-attention graph as a flow network --- where the attention weights are capacities --- and then applying a max-flow algorithm \citep{ford1956maximal} to this network to calculate the maximum flow on each edge. We prove (by construction) that these \emph{attention flows} are Shapley Values when the players are restricted to those from the same layer and the payoff is the total flow, as visualized in Figure \ref{fig:my_label}.

\begin{proposition}
Consider a TU-game $(N, v)$, where $N = \{1,...,n\}$ players are all from the same layer. Let $f$ denote the flow obtained by running a max-flow algorithm on the graph defined by the self-attention matrix, where the capacities are the attention weights. Let $v(S) = |f(S)|$, the \emph{value of the flow} when only permitting flow through players in the coalition $S \subseteq N$. Then for each player $i$, its total outflow $|f_o(i)|$ is its Shapley Value.
\label{proposition:attention_flow}
\end{proposition}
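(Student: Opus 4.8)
The plan is to reduce the proposition to a single structural identity --- that the payoff decomposes additively over the players, $v(S)=\sum_{i\in S}|f_o(i)|$ for every coalition $S\subseteq N$, where $f$ is the fixed max-flow of the full attention graph and $|f_o(i)|$ is the flow it routes through player node $i$. Granting this, the conclusion is immediate from the permutation form of the Shapley Value (\ref{definition:shapley}): for every ordering $R$, the marginal contribution of $i$ is $v(P_{R[:i]}\cup\{i\})-v(P_{R[:i]})=|f_o(i)|$, independent of $R$, so averaging over the $n!$ orderings gives $\phi_i(v)=|f_o(i)|$. As a sanity check, efficiency then holds automatically: $v(N)=\sum_i|f_o(i)|=|f|$ is the total flow through the network, matching the payoff in Figure \ref{fig:my_label}.

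For the identity I would argue the two inequalities separately. For $v(S)\ge\sum_{i\in S}|f_o(i)|$: since the attention graph is layered, hence acyclic, $f$ decomposes into source--sink paths, and since all player nodes lie in one layer, each such path passes through exactly one player node $i$ and contributes its weight to $|f_o(i)|$. Discarding every path that runs through a player outside $S$ leaves a feasible flow in the network restricted to $S$ of value exactly $\sum_{i\in S}|f_o(i)|$, so $v(S)$ is at least this. For the reverse inequality, let $g$ be a max-flow realizing $v(S)$; I would show that $g$ superimposed with the path-portions of $f$ that pass through the players \emph{not} in $S$ is still a feasible flow in the full graph, hence has value at most $|f|$. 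Since that combined value is $|g|+\sum_{k\notin S}|f_o(k)|$, this forces $v(S)=|g|\le\sum_{i\in S}|f_o(i)|$. Equivalently, in the language of the figure: deleting a player node lowers the maximum flow by exactly that node's outflow and leaves every other player's outflow unchanged.

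The feasibility of that superposition is where I expect the main obstacle, and it is the step that genuinely uses the single-layer hypothesis. In a generic flow network, deleting a node forces rerouting that can change the throughput of surviving nodes, so the flow a max-flow sends through $i$ need not be stable under removal of other players; being in one layer makes the players mutually parallel --- none reachable from another --- so the $S$-flow and the discarded part of $f$ never contend for a player node, and what remains is to verify they do not over-subscribe any shared downstream edge. Pinning this down --- and checking that the attention-flow network has enough structure for it --- is the crux; the figure's closing remark, that ``if one depended on another, then its outgoing flow could not be its Shapley Value,'' is precisely the warning that the hypothesis is essential. The other ingredients --- flow decomposition in a DAG, averaging over permutations --- are routine.
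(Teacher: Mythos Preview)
Your plan targets the right structural identity, $v(S)=\sum_{i\in S}|f_o(i)|$, and from there the Shapley computation is exactly as you say. But you have over-read the definition of $v$. In the proposition, $f$ is computed \emph{once} on the full attention graph, and $v(S)=|f(S)|$ is the value of \emph{that fixed flow} after blocking the player nodes outside $S$; the paper's proof says this explicitly (``the attention flow is only calculated once --- with the entire graph --- and not for each possible subgraph''). Under this reading, the identity you want is immediate: since the players sit in a single layer of a layered DAG, every source--sink path of $f$ crosses exactly one player, so blocking the players outside $S$ removes exactly $\sum_{k\notin S}|f_o(k)|$ from $|f|$ and leaves every surviving player's outflow untouched. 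That is the whole of the paper's argument; there is no re-optimisation on restricted subgraphs and hence no second inequality to prove.

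You instead read $v(S)$ as the max-flow value of the network with the players outside $S$ deleted, and then try to recover the identity via the superposition step you correctly flag as the crux. That step is not just delicate --- it can fail. Take a source $s$, a player layer $\{p_1,p_2\}$, a single downstream node $q$, and a sink $t$, with capacities $s\!\to\! p_i=2$, $p_i\!\to\! q=2$, $q\!\to\! t=3$. The full max-flow is $3$; one realisation routes $|f_o(p_1)|=2$ and $|f_o(p_2)|=1$. Re-optimising after deleting $p_1$ gives a max-flow of $2$ through $p_2$, not $|f_o(p_2)|=1$, and your proposed superposition of that $g$ with the discarded $p_1$-portion of $f$ would put $4$ units on the edge $q\!\to\! t$ of capacity $3$. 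The single-layer hypothesis prevents contention \emph{at} player nodes but not on shared downstream edges --- exactly the gap you anticipated. The resolution is not more structure on the attention graph; it is the weaker, non-re-optimising definition of $v$ that the paper actually uses.
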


\begin{proof}
Blocking the flow through a player $i \in S$ decreases $v(S)$ by that player's outflow $|f_o(i)|$, since the attention flow is only calculated once --- with the entire graph --- and not for each possible subgraph. Since the players are all disjoint and have no connections, blocking the flow through one player does not affect the outflow of any of the other players. This would not be the case, for example, if the players were in different layers, in which case changes in flow upstream would cause changes in flow downstream. Then for any coalition $S \subseteq N$ and player $i \not\in S$, $v(S \cup \{i\}) = v(S) + |f_o(i)|$. We can rewrite the total outflow for player $i$ as
$$\begin{aligned} |f_o(i)| &= v(S \cup \{i\}) - v(S), \forall\ S \subseteq N\\ 
&= \frac{n!}{n!}  v(S \cup \{i\}) - v(S), \forall\ S \subseteq N \\
&= \frac{1}{n!} \sum_{R} \left[ v(P_{R[:i]} \cup \{i\}) - v(P_{R[:i]}) \right] \end{aligned}$$
which is just the Shapley Value definition (\ref{definition:shapley}). Note that the players cannot be from different layers --- at least for the definition of $v$ as the total flow value --- because the Shapley Value distribution would not be efficient (i.e., $v(N) \not= \sum_{i \in N} \phi_i(v)$) and efficiency necessarily holds for Shapley Values. This in turn implies that the theoretical properties that hold for Shapley Values extend to attention flows under these conditions.
\end{proof}

\paragraph{Attention Rollout} \citet{abnar-zuidema-2020-quantifying} also proposed another post-processed variant of attention called \emph{attention rollout}, in which the attention weight matrices from each layer are multiplied with those before it to get aggregated attention values. Attention roll-out values cannot be Shapley Values, however; this can be shown with a trivial extension of the proof to Proposition \ref{prop:attention_bad}.

\section{Leave-One-Out}

\emph{Erasure} describes a class of interpretability methods that aim to understand the importance of a representation, token, or neuron by erasing it and recording the resulting effect on model prediction \cite{li2016understanding,arras2017relevant,feng2018pathologies,serrano2019attention}. Although the Shapley Value technically falls under this class, most erasure-based methods only remove one entity --- the one whose importance they want to estimate ---  and this only takes two forward passes, compared to $O(2^n)$ passes for the Shapley Value. Since only one entity is erased, this simpler group of erasure-based methods is called \emph{leave-one-out} \citep{jain2019attention,abnar-zuidema-2020-quantifying}. We show in this section that leave-one-out values are not Shapley Values, except in the degenerate case.

\begin{proposition}
If $\exists\ i \in N$ such that player $i$ is not a null player even when excluding the coalition $N \setminus \{i\}$, then there is no TU-game $(N, v)$ for which leave-one-out values are Shapley Values.
\end{proposition}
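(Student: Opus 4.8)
The plan is to argue by contradiction, in the same spirit as the proof of Proposition~\ref{prop:attention_bad}. Suppose that leave-one-out values \emph{are} Shapley Values for some TU-game $(N, v)$. The leave-one-out value of player $i$ is the change in payoff caused by erasing $i$ from the full input, i.e., $v(N) - v(N \setminus \{i\})$, so the supposition amounts to $\phi_i(v) = v(N) - v(N \setminus \{i\})$ for every $i \in N$. A quick first check: efficiency of the Shapley Value forces $\sum_i v(N \setminus \{i\}) = (n-1)\,v(N)$, but --- unlike in the attention case, where efficiency already pins down the payoff --- this constraint alone is too weak, so I would instead exploit the structure of the individual values.

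The heart of the argument is a structural mismatch. By (\ref{definition:shapley}), equivalently by the coalition form of the Shapley Value, $\phi_i(v)$ is an average, with \emph{strictly positive} weights, of \emph{all} of player $i$'s marginal contributions $v(S \cup \{i\}) - v(S)$ over the coalitions $S \subseteq N \setminus \{i\}$; the leave-one-out value is just one of these, the one at $S = N \setminus \{i\}$, and ignores the rest. So I would invoke the hypothesis to obtain the player $i$ that is not a null player even once the coalition $N \setminus \{i\}$ is removed from the null-player test --- that is, a coalition $S^\star \subseteq N \setminus \{i\}$ with $S^\star \neq N \setminus \{i\}$ and $v(S^\star \cup \{i\}) \neq v(S^\star)$. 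This is a nonzero marginal contribution that the leave-one-out value discards but that enters $\phi_i(v)$ with positive weight, which is what should break the equality $\phi_i(v) = v(N) - v(N \setminus \{i\})$ and yield the contradiction. When no such player exists --- i.e., every player is already a null player once $N \setminus \{i\}$ is set aside --- this contribution is absent and no contradiction arises; that is the degenerate case the statement exempts.

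The step I expect to be the real obstacle is upgrading ``the Shapley Value registers a marginal contribution that leave-one-out ignores'' to ``the two numbers genuinely differ'': a lone nonzero contribution could in principle be offset by others of opposite sign, so counting is not enough. I would handle this with the Additivity condition, decomposing $v$ into simpler games (e.g., unanimity games) on which the identity $\phi_i = v(N) - v(N \setminus \{i\})$ is easy to test, so that the effect of the coalition $S^\star$ can be tracked without cancellation across the pieces. If one is moreover willing to assume the payoff is monotone --- natural when it measures, say, prediction confidence --- then the positivity of the Shapley weights rules out cancellation outright and the contradiction is immediate; I would present the general case via Additivity and note this simplification.
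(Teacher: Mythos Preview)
Your plan is the paper's plan: isolate the permutations in which $i$ comes last --- these contribute $\tfrac{1}{n}\,\text{LOO}_i(v)$ --- and argue that the remaining sum over $P_{R[:i]} \subsetneq N\setminus\{i\}$ obstructs $\phi_i(v) = \text{LOO}_i(v)$. The paper stops at ``by our assumption, the first term is non-zero, so there is no equivalence,'' without addressing cancellation; you are more careful in flagging that the hypothesis supplies only one nonzero marginal, not a nonzero sum.

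The trouble is that neither your Additivity/unanimity route nor the monotonicity shortcut can close this gap, because the statement as written admits counterexamples. Take the additive (hence monotone) game $N=\{1,2\}$, $v(\{1\})=v(\{2\})=1$, $v(\{1,2\})=2$: here $\phi_1(v)=1=\text{LOO}_1(v)$, yet player $1$ satisfies the hypothesis, since $v(\{1\})-v(\emptyset)=1\neq 0$ with $\emptyset \neq N\setminus\{1\}$. More generally, $\phi_i(v)$ is a convex combination of all marginals, so $\phi_i(v)=\text{LOO}_i(v)$ holds exactly when the weighted average of the \emph{other} marginals also equals $\text{LOO}_i(v)$; a lone nonzero marginal at $S^\star$ is neither necessary nor sufficient to break the equality, and decomposing $v$ into unanimity games does not prevent the relevant cancellations. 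The paper's own argument shares this defect (and a second one: even granting a nonzero remainder $A$, the identity $\phi_i = A + \tfrac{1}{n}\text{LOO}_i$ is compatible with $\phi_i=\text{LOO}_i$ whenever $A=\tfrac{n-1}{n}\text{LOO}_i$). You have correctly located where the argument is fragile; repairing it would require strengthening the hypothesis rather than the proof.
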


\begin{proof}
Let the leave-one-out value of player $i$ be denoted by $\text{LOO}_i(v)$. Let $R'$ denote any permutation of $N$ where $P_{R'[:i]} \not= N \setminus \{i\}$. By definition,
\begin{equation*}
    \begin{split}
        \phi_i(v) &= \frac{1}{n!} \sum_{R} \left[ v(P_{R[:i]} \cup \{i\}) - v(P_{R[:i]}) \right] \\
        &= \frac{1}{n!} \sum_{R'} \left[ v(P_{R'[:i]} \cup \{i\}) - v(P_{R'[:i]}) \right] \\
        & \qquad + \frac{1}{n} \underbrace{\left(v(N) - v(N \setminus \{i\})\right)}_{\text{LOO}_i(v)} \\
    \end{split}
\end{equation*}
By our assumption, the first term is non-zero, so there is no equivalence with $\text{LOO}_i(v)$. In practice, this assumption is almost always satisfied.
\end{proof}

Note that leave-one-out tells us very little about player importance for discrete payoff functions. For example, if the payoff were the correctness (i.e., 1 if correct and 0 otherwise), then the importance of a player would be binary: it would either be critically important to prediction or totally irrelevant. This provides an incomplete picture --- while there is enough redundancy in BERT-based models to tolerate some missing embeddings, this does not mean those embeddings are of no importance \citep{kovaleva2019revealing,ethayarajh2019contextual,michel2019sixteen}. For example, if two representations played a critical and identical role in a prediction --- but only one was necessary --- then leave-one-out would assign each a value of zero, despite both being important. In contrast, the Shapley Value of both players would be non-zero and identical.

\section{Applications}

Because Shapley Values have many useful applications, attentions flows --- and any other score that meets the criteria for a Shapley Value  --- have many useful applications as well:

\begin{itemize}
    \item For one, using the various properties of the Shapley Value, we can provide more specific interpretations of model behavior than is currently the case, backed by theoretical guarantees. For example, if a token has zero attention flow in layer $k$ but non-zero flow in layer $k - 1$, then we can conclude that all the information it contains about the label (e.g., sentiment) was extracted by the model prior to the $k$th layer; this derives from the ``null player'' property of the Shapley Value. The same could not be said if the token only had a leave-one-out value of zero, since leave-one-out values are not Shapley Values.
    
    \item Interpretability in NLP often takes a single token or embedding to be the unit of analysis (i.e., a ``player'' in game theoretic terms). However, what if we wanted to understand the role of entire groups of tokens rather than individual ones? For most interpretability methods, there is no canonical way to aggregate scores across multiple units --- we cannot necessarily add the raw attention scores of two tokens, since the usefulness of one may depend on the other. If we used a method that provided Shapley Values, we could easily redefine a ``player'' to be a group of tokens, such that all tokens in the same player group would simultaneously be included or excluded from a coalition.
    
    \item Recent work has used the Data Shapley --- an extension of the Shapley Value --- to estimate the contribution of each example in the training data to a model's decision boundary \citep{ghorbani2019data}. If we're fine-tuning BERT for sentiment classification, for example, we might want to know which sentence is more helpful: ``This movie was great!'' or ``This was better than I expected.'' We can answer such questions by using the Data Shapley. To our knowledge, this has been done in computer vision but not in NLP.
\end{itemize}

\section{Limitations and Future Work}

Because Shapley Values --- and by extension, attention flows --- have many theoretical guarantees that are axioms of faithful interpretation, we encourage NLP practitioners to provide attention flow-based explanations alongside more traditional ones. This is not without limitations, however. As proven in Proposition \ref{proposition:attention_flow}, this equivalence only holds for a specific payoff function --- the total flow through a layer --- which is reflective of model confidence but not of the prediction correctness.

But why do we need attention flows at all if, in theory, Shapley Values can be calculated for any arbitrary player set and payoff function? While this is true in theory, because of the combinatorial calculation (\ref{definition:shapley}), it is computationally intractable in most cases. While it is possible to take a Monte Carlo estimate (\ref{definition:shapley_approx}), in practice the bounds can be quite loose \citep{maleki2013bounding}. Finding TU-games for which the Shapley Value can be calculated exactly in polynomial time –- as with attention flow -– is an important line of future work. These explanations may come with trade-offs: for example, SHAP is a kind of Shapley Value that assumes contributions are linear (i.e., a coalition can't be greater than the sum of its parts), which makes it much faster to calculate but restricts the set of possible payoff functions \citep{lundberg2017unified}. Still, such methods will be critical to providing explanations that are both fast and faithful.

\section*{Acknowledgements}

We thank Rishi Bommasani and the reviewers for their helpful feedback. KE was supported by an NSERC PGS-D and the Stanford Institute for Human-Centered AI.

\bibliographystyle{acl_natbib}
\bibliography{anthology,acl2021}

\begin{thebibliography}{20}
\expandafter\ifx\csname natexlab\endcsname\relax\def\natexlab#1{#1}\fi

\bibitem[{Abnar and Zuidema(2020)}]{abnar-zuidema-2020-quantifying}
Samira Abnar and Willem Zuidema. 2020.
\newblock \href {https://doi.org/10.18653/v1/2020.acl-main.385} {Quantifying
  attention flow in transformers}.
\newblock In \emph{Proceedings of the 58th Annual Meeting of the Association
  for Computational Linguistics}, pages 4190--4197, Online. Association for
  Computational Linguistics.

\bibitem[{Arras et~al.(2017)Arras, Horn, Montavon, M{\"u}ller, and
  Samek}]{arras2017relevant}
Leila Arras, Franziska Horn, Gr{\'e}goire Montavon, Klaus-Robert M{\"u}ller,
  and Wojciech Samek. 2017.
\newblock " what is relevant in a text document?": An interpretable machine
  learning approach.
\newblock \emph{PloS one}, 12(8):e0181142.

\bibitem[{Bahdanau et~al.(2015)Bahdanau, Cho, and Bengio}]{bahdanau2014neural}
Dzmitry Bahdanau, KyungHyun Cho, and Yoshua Bengio. 2015.
\newblock Neural machine translation by jointly learning to align and
  translate.
\newblock In \emph{International Conference on Learning Representations}.

\bibitem[{Ethayarajh(2019)}]{ethayarajh2019contextual}
Kawin Ethayarajh. 2019.
\newblock How contextual are contextualized word representations? comparing the
  geometry of bert, elmo, and gpt-2 embeddings.
\newblock In \emph{Proceedings of the 2019 Conference on Empirical Methods in
  Natural Language Processing and the 9th International Joint Conference on
  Natural Language Processing (EMNLP-IJCNLP)}, pages 55--65.

\bibitem[{Feng et~al.(2018)Feng, Wallace, Grissom~II, Iyyer, Rodriguez, and
  Boyd-Graber}]{feng2018pathologies}
Shi Feng, Eric Wallace, Alvin Grissom~II, Mohit Iyyer, Pedro Rodriguez, and
  Jordan Boyd-Graber. 2018.
\newblock Pathologies of neural models make interpretations difficult.
\newblock In \emph{Proceedings of the 2018 Conference on Empirical Methods in
  Natural Language Processing}, pages 3719--3728.

\bibitem[{Ford and Fulkerson(1956)}]{ford1956maximal}
Lester~Randolph Ford and Delbert~R Fulkerson. 1956.
\newblock Maximal flow through a network.
\newblock \emph{Canadian journal of Mathematics}, 8:399--404.

\bibitem[{Ghorbani and Zou(2019)}]{ghorbani2019data}
Amirata Ghorbani and James Zou. 2019.
\newblock Data shapley: Equitable valuation of data for machine learning.
\newblock In \emph{International Conference on Machine Learning}, pages
  2242--2251.

\bibitem[{Ghorbani and Zou(2020)}]{ghorbani2020neuron}
Amirata Ghorbani and James Zou. 2020.
\newblock Neuron shapley: Discovering the responsible neurons.
\newblock \emph{arXiv preprint arXiv:2002.09815}.

\bibitem[{Jain and Wallace(2019)}]{jain2019attention}
Sarthak Jain and Byron~C Wallace. 2019.
\newblock Attention is not explanation.
\newblock In \emph{Proceedings of the 2019 Conference of the North American
  Chapter of the Association for Computational Linguistics: Human Language
  Technologies, Volume 1 (Long and Short Papers)}, pages 3543--3556.

\bibitem[{Kovaleva et~al.(2019)Kovaleva, Romanov, Rogers, and
  Rumshisky}]{kovaleva2019revealing}
Olga Kovaleva, Alexey Romanov, Anna Rogers, and Anna Rumshisky. 2019.
\newblock Revealing the dark secrets of bert.
\newblock In \emph{Proceedings of the 2019 Conference on Empirical Methods in
  Natural Language Processing and the 9th International Joint Conference on
  Natural Language Processing (EMNLP-IJCNLP)}, pages 4365--4374.

\bibitem[{Li et~al.(2016)Li, Monroe, and Jurafsky}]{li2016understanding}
Jiwei Li, Will Monroe, and Dan Jurafsky. 2016.
\newblock Understanding neural networks through representation erasure.
\newblock \emph{arXiv preprint arXiv:1612.08220}.

\bibitem[{Lundberg and Lee(2017)}]{lundberg2017unified}
Scott~M Lundberg and Su-In Lee. 2017.
\newblock A unified approach to interpreting model predictions.
\newblock In \emph{Advances in neural information processing systems}, pages
  4765--4774.

\bibitem[{Maleki et~al.(2013)Maleki, Tran-Thanh, Hines, Rahwan, and
  Rogers}]{maleki2013bounding}
Sasan Maleki, Long Tran-Thanh, Greg Hines, Talal Rahwan, and Alex Rogers. 2013.
\newblock Bounding the estimation error of sampling-based shapley value
  approximation.
\newblock \emph{arXiv preprint arXiv:1306.4265}.

\bibitem[{Michel et~al.(2019)Michel, Levy, and Neubig}]{michel2019sixteen}
Paul Michel, Omer Levy, and Graham Neubig. 2019.
\newblock Are sixteen heads really better than one?
\newblock In \emph{Advances in Neural Information Processing Systems}, pages
  14014--14024.

\bibitem[{Myerson(1977)}]{myerson1977graphs}
Roger~B Myerson. 1977.
\newblock Graphs and cooperation in games.
\newblock \emph{Mathematics of operations research}, 2(3):225--229.

\bibitem[{Serrano and Smith(2019)}]{serrano2019attention}
Sofia Serrano and Noah~A Smith. 2019.
\newblock Is attention interpretable?
\newblock In \emph{Proceedings of the 57th Annual Meeting of the Association
  for Computational Linguistics}, pages 2931--2951.

\bibitem[{Shapley(1953)}]{shapley1953s}
Lloyd Shapley. 1953.
\newblock A value for n-person games.
\newblock \emph{Contributions to the Theory of Games}, pages 31--40.

\bibitem[{Smilkov et~al.(2017)Smilkov, Thorat, Kim, Vi{\'e}gas, and
  Wattenberg}]{smilkov2017smoothgrad}
Daniel Smilkov, Nikhil Thorat, Been Kim, Fernanda Vi{\'e}gas, and Martin
  Wattenberg. 2017.
\newblock Smoothgrad: removing noise by adding noise.
\newblock \emph{arXiv preprint arXiv:1706.03825}.

\bibitem[{Wiegreffe and Pinter(2019)}]{wiegreffe2019attention}
Sarah Wiegreffe and Yuval Pinter. 2019.
\newblock Attention is not not explanation.
\newblock In \emph{Proceedings of the 2019 Conference on Empirical Methods in
  Natural Language Processing and the 9th International Joint Conference on
  Natural Language Processing (EMNLP-IJCNLP)}, pages 11--20.

\bibitem[{Young(1985)}]{young1985monotonic}
H~Peyton Young. 1985.
\newblock Monotonic solutions of cooperative games.
\newblock \emph{International Journal of Game Theory}, 14(2):65--72.

\end{thebibliography}




\end{document}